\renewcommand{\@noticestring}{}
\newcommand{\except}{\setminus}
\newcommand{\E}{\mathbb E}
\newcommand{\V}{\mathbb V}
\newcommand{\Cov}{\text{Cov}}
\newcommand{\VLR}{V_{\text{LR}}}
\newcommand{\VReLEG}{V_{\text{ours}}}
\newcommand{\q}{q_\phi(z | x)}
\newcommand{\df}[2]{\partial #1 / \partial #2}
\newcommand{\Df}[2]{{\partial #1 \over \partial #2}}
\newcommand{\noise}{\epsilon}
\newcommand{\pa}{\text{pa}}
\newcommand{\qi}{q_{\phi_i}(z_i | \pa_i)}
\newcommand{\qinoise}{q_{\phi_i}(z_i | \pa_i(\noise_{\except i}))}
\newcommand{\g}{g_{\phi}(x, \noise)}
\newcommand{\gi}{g_{\phi_i}(\pa_i, \noise_i)}
\newcommand{\gip}{g_{\phi_{i'}}(\pa_{i'}, \noise_{i'})}
\newcommand{\gxi}{h_{\phi_{\except i}}(x, z_i, \noise_{\except i})}
\newcommand{\secref}[1]{Sec.\ref{sec:#1}}
\newcommand{\theoref}[1]{Theorem \ref{thm:#1}}
\newcommand{\eref}[1]{Eq. \eqref{eq:#1}}
\newcommand{\aref}[1]{Alg. \ref{alg:#1}}
\newtheorem{theorem}{Theorem}
\newtheorem{lemma}[theorem]{Lemma}
\title{Reparameterization trick for discrete variables}
\author{
    Seiya Tokui\\
        Preferred Networks, Inc.\\
        The University of Tokyo\\
        Tokyo 100-0004, Japan\\
        \texttt{tokui@preferred.jp} \\
    \And
    Issei Sato\\
        The University of Tokyo\\
        Tokyo 113-8656, Japan\\
        \texttt{sato@k.u-tokyo.ac.jp}
}
\begin{document}

\maketitle

\begin{abstract}
  Low-variance gradient estimation is crucial for learning directed graphical models parameterized by neural networks, where the reparameterization trick is widely used for those with continuous variables.
  While this technique gives low-variance gradient estimates, it has not been directly applicable to discrete variables, the sampling of which inherently requires discontinuous operations.
  We argue that the discontinuity can be bypassed by marginalizing out the variable of interest, which results in a new reparameterization trick for discrete variables.
  This reparameterization greatly reduces the variance, which is understood by regarding the method as an application of common random numbers to the estimation.
  The resulting estimator is theoretically guaranteed to have a variance not larger than that of the likelihood-ratio method with the optimal input-dependent baseline.
  We give empirical results for variational learning of sigmoid belief networks.
\end{abstract}

\section{Introduction} \label{sec:intro}

Directed graphical models parameterized by neural networks are widely used for complicated data distributions in high dimensional spaces, which require high levels of non-linearity and uncertainty to be captured.
For learning such models, the objective function is often given as an expectation of a nonlinear function over latent variables, e.g. the evidence lower bound of deep directed generative models \citep{Kingma14,Mnih14}.
In this case, computing the exact gradient of the expectation is generally infeasible, and it has to be estimated approximately.
For each variable, we consider a problem of estimating the gradient of the objective function w.r.t. parameters on which the variable directly depends.

When the variables are modeled by certian continuous distributions such as a Gaussian, the reparameterization trick \citep{Kingma14,Rezende14} is often employed for the gradient estimation.
With this technique, we can adjust a sampled configuration continuously within the domain space of the variables, where the variance of the gradient estimate is kept low.
While it has been shown to be efficient in various applications \citep{Gregor15,Heese15,Maaloe16}, it cannot be applied to discrete variables, since any reparameterization includes discontinuous operations for which the gradient cannot be estimated.
Instead, the likelihood-ratio method \citep{Glynn90,Williams92} is used for discrete variables, dispite its high-variance estimation.

In this study, we propose a simple way to apply reparameterization to discrete variables, while avoiding the discontinuity by marginalizing out the variable of interest.
This method is applicable to any kind of variable for which we can approximate the expectation directly, although we only consider the discrete case in this study.
The variance of the gradient estimate is guaranteed not to be larger than that of the likelihood-ratio method with the optimal input-dependent baseline.

Our algorithm requires us to marginalize out the discrete variable, for which we need to simulate all of its configurations.
The simulations are essential for the gradient estimation, because a simulation of any single configuration provides no information about the loss landscape over the other configurations.
Existing gradient estimators for discrete variables simulate each configuration separately, while in our algorithm, they are simulated all at once by sharing the reparameterized noise factors.
Our method can be viewed as an application of \emph{common random numbers} to these simulations, which is known to reduce the variance when the target value is expressed as a difference between two random variables.
It was applied to the finite-difference gradient estimator in \citet{Glynn89}, whereas, here, we apply it to the exact gradient computation of an expectation over a discrete variable.
It has greatly reduced variance compared to existing techniques.

\paragraph{Related work}
The likelihood-ratio method \citep{Glynn90,Williams92} is often used to make gradient estimations of discrete variables, in which only one configuration is evaluated by simulation at each iteration.
It requires multiple iterations to cover the information of all configurations, and it simulates different configurations separately.
Furthermore, the likelihood ratio becomes unstable when the probability mass concentrates in only a few configurations; this also causes high variance.
There are many techniques to reduce the variance \citep{Paisley12,Bengio13,Ranganath14,Mnih14,Gu16}, although their reductions are not enough for large and complex models.
In another approach, called local expectation gradient \citep{Titsias15}, the variable of interest is locally marginalized out with all other variables simulated only once.
Our method is deeply connected to this method; both behave equivalently if the variable has no descendants in the graphical model.
When the variable has descendants, the local expectation gradient simulates them only on one configuration of the variable, and thus requires multiple iterations to simulate all configurations, each of which is simulated separately.

The rest of this paper is organized as follows.
In \secref{releg}, we introduce our method and give a theoretical analysis of it.
We show experimental results in \secref{experiments} and give a brief conclution in \secref{conclusion}.

\section{Method} \label{sec:releg}

Our task is to estimate the gradient of $F(\phi; x) = \E_{\q} f(x, z)$, where $f$ is a feasible function, $\q = \prod_{i=1}^M \qi$ a directed model of an $M$-dimensional vector of variables $z = (z_1, \dots, z_M)$ conditioned on an input to the system $x$, and $\phi$ the model parameters.
Here $\pa_i$ denotes the parent nodes of $z_i$ in the graphical model.
For simplicity, we will assume that $\phi_i$ and $\phi_{i'}$ for $i \neq i'$ do not share any parameters, but this assumption can easily be removed.
One example of our setting is the gradient estimation for the variational inference of a generative model $p(x, z)$ with an approximate posterior $\q$, where the objective is given by $F$ with $f(x, z) = \log p(x, z) - \log \q$.
We will omit the gradient term corresponding to the dependency of $f$ on $\phi$ from our discussion, since it is easy to estimate.

Suppose each sample from a conditional $\qi$ is reparameterized as $z_i = \gi$, where $\noise_i$ is drawn from a noise distribution $p(\noise_i)$.
When $z_i$ is discrete, the gradient $\nabla_{\phi_i} F(\phi; x)$ cannot be estimated using the reparameterization trick, since $\gi$ is not continuous at some point of $\phi_i$.

We can bypass the discontinuity by marginalizing out $\noise_i$.
Here, let $\noise_{\except i}$ be the noise factors other than $\noise_i$.
We write the whole reparameterization as $z = \g$, and transform the gradient as follows.
\begin{equation}
  \nabla_{\phi_i} \E_{\q} f(x, z) = \nabla_{\phi_i} \E_{p(\noise)} f(x, \g) = \E_{p(\noise_{\except i})} \nabla_{\phi_i} \E_{p(\noise_i)} f(x, \g).
  \label{eq:releg-noise}
\end{equation}
This transformation comes from the observation that, even if $f(x, \g)$ is not continuous, its expectation over $p(\noise_i)$ is differentiable by $\phi_i$.
If this inner expectation can be computed, \eref{releg-noise} can be estimated by sampling $\noise_{\except i}$.

The inner expectation is computed as follows.
Let $z_{\except i}$ be the variables other than $z_i$ and $z_{\except i} = \gxi$ an ancestral sampling procedure of them with clamped $z_i$; i.e., $z_{i'}$ for each $i' \neq i$ is computed by $\gip$ with $z_i$ fixed to the given one.
The inner expectation is then transformed as $\E_{p(\noise_i)} f(x, \g) = \sum_{z_i} f(x, z) \qi$, with which we can rewrite \eref{releg-noise} as follows.
\begin{equation}
  \nabla_{\phi_i} \E_{\q} f(x, z)
  = \E_{p(\noise_{\except i})} \sum_{z_i} f(x, z) \nabla_{\phi_i} \qi \Big|_{z_{\except i} = \gxi}.
  \label{eq:releg}
\end{equation}
Note that the gradient $\nabla_{\phi_i} \qi$ can be computed analytically.
The simulated variables $z_{\except i}$ can contain discrete variables, which are left reparameterized with the discontinuous function $\gxi$.
The resulting algorithm is shown in \aref{releg}.

\begin{algorithm}
  \caption{
    Gradient estimation by \eref{releg}.
    Note that the procedure can be made more efficient by reusing variables that are not descendants of $z_i$ in the ancestral sampling at line \ref{state:simulation}.
  }
  \label{alg:releg}
  \begin{algorithmic}[1]
    \Require a set of parameters $\phi$ and an input variable $x$.
    \State Sample $\noise \sim p(\noise)$.
    \For{ $i = 1, \dots, M$ }
      \ForAll{ configurations of $z_i$ }
        \State $z_{\except i} := \gxi$.  \label{state:simulation}
        \State $f_{z_i} := f(x, z) \nabla_{\phi_i} \qi$.
      \EndFor
      \State $\Delta_i := \sum_{z_i} f_{z_i}$.
    \EndFor
    \State \Return $(\Delta_1, \dots, \Delta_M)$ as an estimation of $\nabla_\phi F(\phi; x)$.
  \end{algorithmic}
\end{algorithm}

For example, suppose that $z_i$ is a Bernoulli variable whose mean is given by $\mu_i = \mu_i(\pa_i, \phi_i)$.
For gradient estimations w.r.t. $\phi_{\except i}$, it can be reparameterized as $z_i = 1$ iff $\noise_i < \mu_i$ for $\noise_i \sim U(0, 1)$.
For the gradient estimation w.r.t. $\phi_i$, $\nabla_{\mu_i} \qi$ is $1$ if $z_i = 1$ and $-1$ otherwise; thus, the estimator is given by $(f_1 - f_0) \nabla_{\phi_i} \mu_i$, where $f_k$ denotes the value of $f(x, z)$ simulated with $z_i=k$ for $k \in \{0, 1\}$.
The variance of an estimation of $f_1 - f_0$ is given by $\V(f_1 - f_0) = \V f_0 + \V f_1 - 2\Cov(f_0, f_1)$, which is reduced by a large covariance of $f_0$ and $f_1$.
Our estimator reuses the same noise factor $\noise_{\except i}$ for simulations of these terms; thus, the covariance is expected to be large.
This technique is known as the method of common random numbers, with which our estimator enjoys low variance.

While the formulation is similar to the original reparameterization trick, one big difference is that the estimator \eqref{eq:releg} does not use the gradient of $f$ w.r.t. $z_i$.
This is essential in the case of $z_i$ being discrete, since the gradient of $f$ is not related to the expectation gradient in general.
This can be easily understood in the above Bernoulli case, where the exact expectation is written as a difference of $f$ on $z_i=1$ and $z_i=0$.
Even if $f$ is smoothly defined over $z_i \in [0, 1]$, there is no guarantee that the gradient of $f$ at a given $z_i$ approximates the true gradient, especially when $f$ is highly nonlinear.

\paragraph{Theoretical analysis}
The variance of our estimator is guaranteed not to be larger than that of the likelihood-ratio method.
Let $\phi_{ij}$ be the $j$-th element of the parameter vector $\phi_i$.
Here, we focus on the estimation of the partial derivative $\df{F}{\phi_{ij}}$.
The likelihood-ratio method can be formulated as a Monte-Carlo simulation of an expectation,
\begin{equation}
  \Df{F(\phi; x)}{\phi_{ij}}
  = \E_{\q} (f(x, z) - b) \Df{}{\phi_{ij}} \log \qi,  \notag
\end{equation}
  where $b$ is a \emph{baseline} that can depend on variables other than $z_i$ and its descendants.
Our estimator is a Monte Carlo simulation of \eref{releg} with $\nabla_{\phi_i}$ replaced by $\partial / \partial \phi_{ij}$.
Using these formulations, the following statement holds.
\begin{theorem} \label{thm:variance-lrb-releg}
  Let $\phi_{ij}$ be any parameter, $b$ any baseline, $\VLR(b)$ the variance of the likelihood-ratio estimator, and $\VReLEG$ that of the proposed estimator; then it holds that $\VReLEG \le \VLR(b)$.
\end{theorem}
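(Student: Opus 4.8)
The plan is to exhibit the proposed estimator as a Rao--Blackwellization of the likelihood-ratio estimator, after which the inequality is an immediate consequence of the law of total variance. Fix the index $i$ and the coordinate $j$, and view both estimators as scalar functions of the noise, with $\noise = (\noise_i, \noise_{\except i})$ and $\noise \sim p(\noise) = \prod_k p(\noise_k)$ so that $\noise_i \perp \noise_{\except i}$. Write the likelihood-ratio integrand as
\[
  G_{\text{LR}}(\noise) = \bigl( f(x, \g) - b \bigr)\, \Df{}{\phi_{ij}} \log \qi ,
\]
where $z = \g$ is obtained by ancestral sampling, $\pa_i$ are the parents of $z_i$, and $b$ is an admissible baseline. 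The first observation I would establish is that $G_{\text{LR}}$ has the dependence structure it appears to have: the parents $\pa_i$, being ancestors of $z_i$, are a deterministic function of $\noise_{\except i}$; by induction on a topological order the same is true of every non-descendant of $z_i$, so $b$ (a function of non-descendants only) is $\noise_{\except i}$-measurable and, in particular, does not vary with $z_i$; and clamping $z_i$ to the value $\gi$ returned by ancestral sampling and re-running the remaining draws through $\gxi$ reproduces exactly the $z_{\except i}$ of the original sample, so $f(x, \g)$ agrees with $f(x, z)\big|_{z_{\except i} = \gxi}$ evaluated at $z_i = \gi$.

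Next I would compute the conditional expectation $\E_{p(\noise_i)}[\,G_{\text{LR}}(\noise) \mid \noise_{\except i}\,]$. Conditioning on $\noise_{\except i}$ freezes $\pa_i$, and then $z_i = \gi$ is distributed as $\qi$; hence
\[
  \E\bigl[\,G_{\text{LR}} \mid \noise_{\except i}\,\bigr]
  = \sum_{z_i} \qi \bigl( f(x, z) - b \bigr) \Df{}{\phi_{ij}} \log \qi \,\Big|_{z_{\except i} = \gxi}
  = \sum_{z_i} \bigl( f(x, z) - b \bigr) \Df{\qi}{\phi_{ij}} \,\Big|_{z_{\except i} = \gxi},
\]
using $q\,\partial \log q = \partial q$. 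The baseline contributes $b \sum_{z_i} \partial_{\phi_{ij}} \qi = b\, \partial_{\phi_{ij}} \sum_{z_i} \qi = b\, \partial_{\phi_{ij}} 1 = 0$, where the first equality uses that $b$ is constant in $z_i$. What survives is precisely the integrand of \eref{releg} with $\nabla_{\phi_i}$ replaced by $\partial/\partial\phi_{ij}$, i.e. the proposed estimator $G_{\text{ours}}(\noise_{\except i})$, which is itself $\noise_{\except i}$-measurable. Thus $\E[\,G_{\text{LR}} \mid \noise_{\except i}\,] = G_{\text{ours}}$.

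The conclusion then follows from the law of total variance applied along the split $\noise = (\noise_i, \noise_{\except i})$:
\[
  \VLR(b) = \V\bigl(G_{\text{LR}}\bigr)
  = \E\bigl[\, \V(G_{\text{LR}} \mid \noise_{\except i}) \,\bigr] + \V\bigl( \E[\,G_{\text{LR}} \mid \noise_{\except i}\,] \bigr)
  = \E\bigl[\, \V(G_{\text{LR}} \mid \noise_{\except i}) \,\bigr] + \VReLEG \;\ge\; \VReLEG ,
\]
because a conditional variance is nonnegative and so is its expectation; this holds for every admissible $b$. As a byproduct, $\E[\,G_{\text{LR}} \mid \noise_{\except i}\,] = G_{\text{ours}}$ together with the unbiasedness of the likelihood-ratio estimator re-derives the unbiasedness of \eref{releg}.

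I expect the genuine work to be not the variance step --- which is a textbook Rao--Blackwell argument once the conditional-expectation identity is in hand --- but the careful accounting behind that identity: checking that the clamped re-simulation $\gxi$ is consistent with full ancestral sampling, that an admissible baseline is $\noise_{\except i}$-measurable and hence constant in $z_i$ (this is exactly where the restriction on baselines is consumed), and that conditioning on $\noise_{\except i}$ leaves $z_i$ with its conditional law $\qi$. These points are elementary but they are where all the structural assumptions of the model are spent.
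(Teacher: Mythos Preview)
Your proposal is correct and follows essentially the same route as the paper: both decompose $\VLR(b)$ via the law of total variance (the paper states it as a ``variance partitioning'' lemma) along the split $(\noise_{\except i}, z_i)$, identify the variance-of-conditional-expectation term with $\VReLEG$ after the baseline drops out from $\sum_{z_i} \partial_{\phi_{ij}} q = 0$, and conclude from nonnegativity of the remaining expected-conditional-variance term. Your Rao--Blackwell framing and your explicit discussion of the measurability of $b$ and the consistency of the clamped re-simulation are welcome elaborations, but the underlying argument is the same.
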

In particular, our method always achieves a variance not larger than that of the likelihood-ratio method with the optimal input-dependent baseline.
The proof is given in the appendix.

\section{Experiments} \label{sec:experiments}

We empirically compared the likelihood-ratio method and our estimator in variational learning of sigmoid belief networks (SBNs) \citep{Neal92}.
For the deepest layer, the logit was directly parameterized in the generative model.
We used a reverse-directional SBN for the posterior approximation; i.e., the variational model infers latent variables from shallow layers to deep layers one by one.
The models we used were the same as those used in \citet{Mnih14}, except the number of layers and units.
We denote the architecture using a notation like SBN($H_L$-$\cdots$-$H_1$), where $H_\ell$ represents the number of units in the $\ell$-th layer.

We conducted experiments on the MNIST dataset \citep{LeCun98}, a set of 28x28 pixel gray-scale images of hand-written digits.
We binarized each image with the procedure described in \citet{Salakhutdinov08}.
We followed the standard data separation and used 10,000 images for testing and the rest for training.
We further divided the latter into 50,000 training images and 10,000 validation images.
We evaluated the model with the validation set at regular intervals throughout training and used the best model for the final test.

We trained the SBNs with RMSprop \citep{Tieleman12} using gradient estimates given by the likelihood-ratio method (LR) or our method (ours).
The learning rate was set to 0.001.
We used mini-batches of size 100 in all experiments, and applied a weight decay with a coefficient of 0.001 to all weight matrices (not to the bias parameters).
As for LR, we used the baseline proposed in \citet{Mnih14}, which consists of a running estimate of the expected loss and input-dependent loss estimation with layer-wise extra neural networks.
We did not use variance normalization \citep{Mnih14}, as RMSprop already achieves per-element variance normalization.

The computational cost of our method is $M$ times larger than that of the likelihood-ratio method, since simulations of $z_{\except i}$ for all $i = 1, \dots, M$ are required.
The cost is not as problematic as expected, since the additional factor of $M$ is easily parallelized.
In our experiments, using an NVIDIA GeForce TITAN X, the actual difference in computational times was less than two-fold with $400 \le M \le 800$.

\begin{figure}
  \begin{minipage}{0.5\hsize}
    \begin{center}
      \includegraphics[width=\hsize]{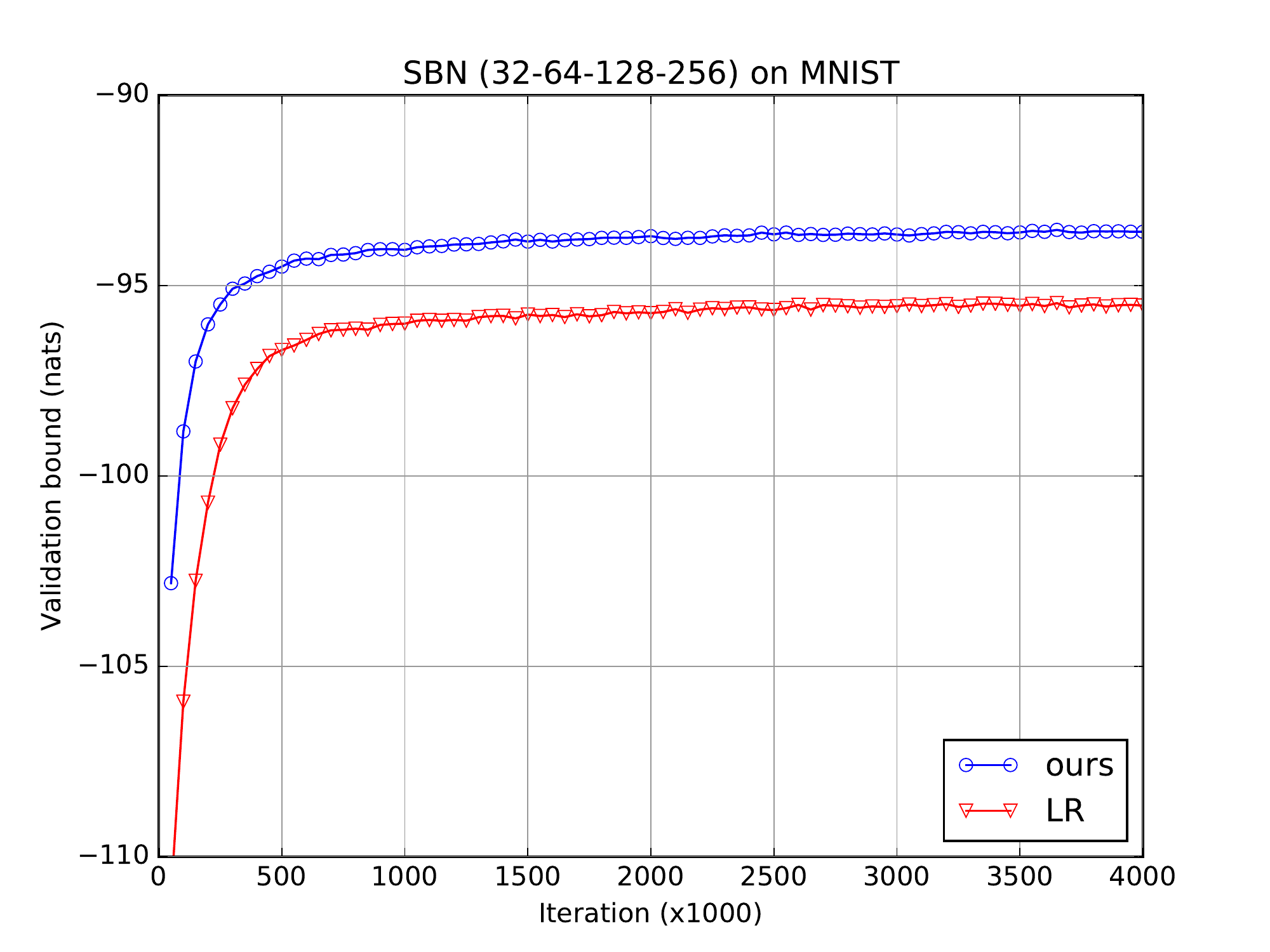}
    \end{center}
  \end{minipage}
  \begin{minipage}{0.5\hsize}
    \begin{center}
      \includegraphics[width=\hsize]{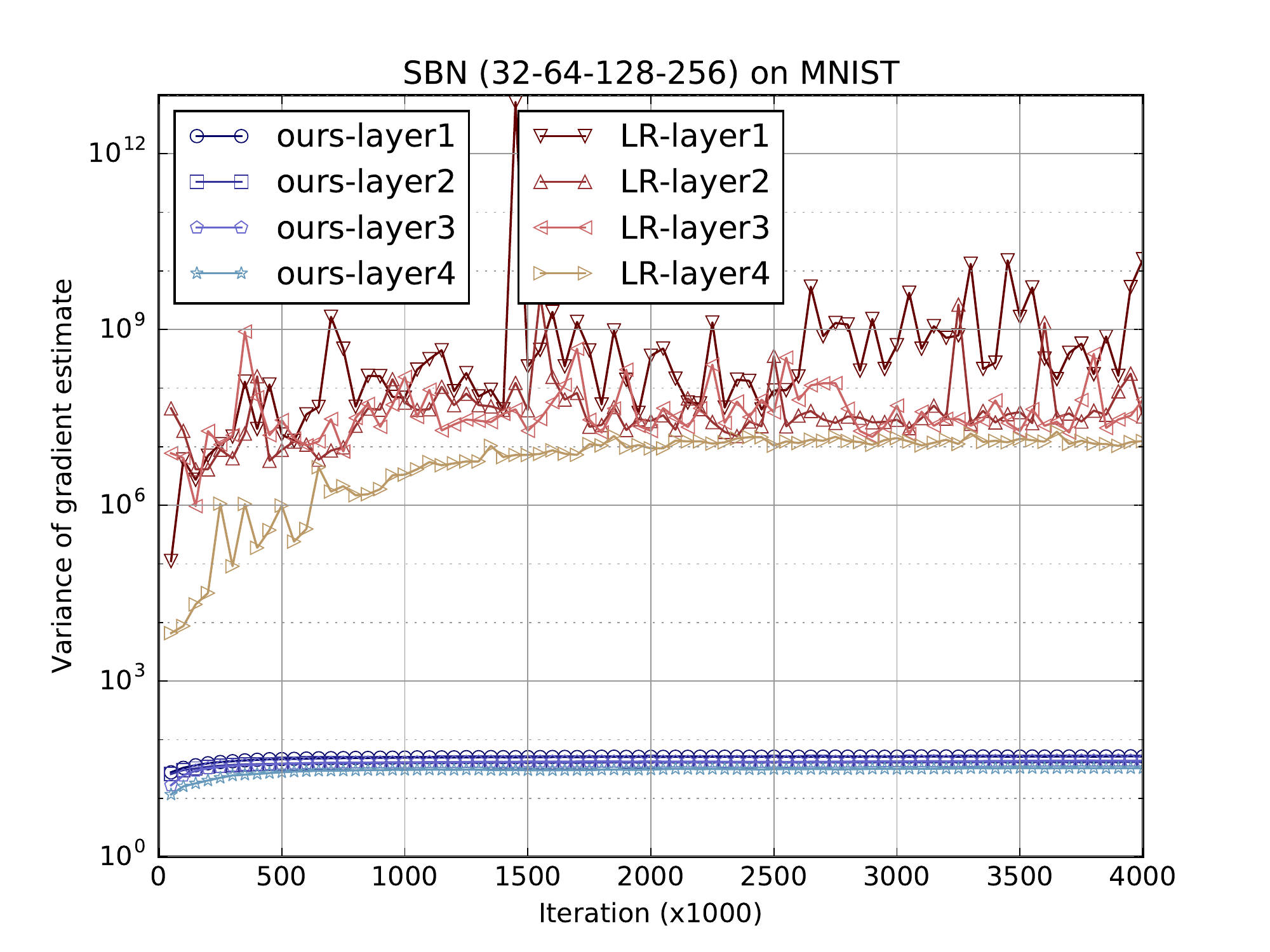}
    \end{center}
  \end{minipage}
  \caption{
    Left: variational lower bound of the log likelihood evaluated on the validation set (higher is better).
    Right: variance of the gradient w.r.t. mean parameters of Bernoulli variables for each layer.
    The variance of each unit is estimated using 50 million samples (i.e., 1,000 samples for each training image) and then averaged over all units in each layer.
  }
  \label{fig:training}
\end{figure}

\begin{table}
  \caption{Variational bound of the negative log likelihood on the test set with various architectures.}
  \label{tbl:vlb}
  \begin{center}
    \begin{tabular}{|r|r|r|r|r|} \hline
            & 200-200 & 200-200-200 & 200-200-200-200 & 32-64-128-256 \\ \hline
      LR    & 98.86 & 95.40 & 94.82 & 94.73 \\ \hline
      Ours & 98.28 & 95.03 & 93.67 & 92.79 \\ \hline
    \end{tabular}
  \end{center}
\end{table}

Figure \ref{fig:training} plots the validation performance and gradient variance for the four-layer model SBN(32-64-128-256).
It shows that our method learns much faster than LR with the input-dependent adaptive baseline.
The right figure shows the variance reduction effect.
The variance of our method is smaller than that of LR, where the difference ($\times 10^5$ to $\times 10^8$) is much larger than the difference in computational costs.
The model with the best validation score was used for evaluation on the test set, whose results are listed in Table \ref{tbl:vlb}.
For various architectures of SBNs, our method outperforms LR.
In particular, the difference is larger when the model is deeper.
This can be qualitatively understood by observing that the optimization of a deeper model becomes more difficult where the quality of the gradient estimate critically affects the optimization performance.

\section{Conclusion} \label{sec:conclusion}

We showed that reparameterization can still be applied to discrete variables, which enables us to use common random numbers in evaluations of multiple configurations.
The resulting method has lower variance; we confirmed this both theoretically and empirically.
Although its computational cost is worse than the existing methods, it empirically runs quickly enough; the additional cost can be alleviated by parallelizing the computation on modern GPUs.
Future work will include seeking a better way to balance the tradeoff between the computational cost and variance reduction.

\subsubsection*{Acknowledgments}

We thank Daisuke Okanohara for helpful discussions.

\section*{References}

\small
\begingroup
\renewcommand{\section}[2]{}  
\bibliography{releg}{}
\bibliographystyle{plainnat}
\endgroup

\appendix

\section{Proof of \theoref{variance-lrb-releg}}

We first introduce a well-known lemma used in our analysis.

\begin{lemma}[Variance partitioning] \label{lem:var-part}
  Let $X$ and $Y$ be sets of random variables, and $h = h(X, Y)$ a function on them.
  Then the following formula holds: $\V_{X, Y} h = \E_X \V_{Y | X} h + \V_X \E_{Y | X} h$.
\end{lemma}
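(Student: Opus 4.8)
The plan is to prove the identity directly from the definition of variance through a second-moment expansion, using only the tower property (law of total expectation) as an auxiliary fact. First I would write the left-hand side as $\V_{X,Y} h = \E_{X,Y} h^2 - (\E_{X,Y} h)^2$, and abbreviate the conditional mean by $m(X) := \E_{Y|X} h$, so that by definition $\V_{Y|X} h = \E_{Y|X} h^2 - m(X)^2$. This notation isolates the single object, $m(X)$, through which both right-hand terms will be expressed.

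Next I would invoke the tower property, $\E_{X,Y}(\cdot) = \E_X \E_{Y|X}(\cdot)$, for both the first and second moments, giving $\E_{X,Y} h = \E_X m(X)$ and $\E_{X,Y} h^2 = \E_X \E_{Y|X} h^2$. Substituting these lets me rewrite each term on the right-hand side in terms of $m(X)$ and the unconditional moments: the first term becomes $\E_X \V_{Y|X} h = \E_X \E_{Y|X} h^2 - \E_X m(X)^2 = \E_{X,Y} h^2 - \E_X m(X)^2$, while the second term, being the variance of a function of $X$ alone, becomes $\V_X \E_{Y|X} h = \V_X m(X) = \E_X m(X)^2 - (\E_X m(X))^2$.

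Adding these two expressions, the $\E_X m(X)^2$ contributions cancel — this cancellation of the cross term is the crux of the argument — leaving $\E_{X,Y} h^2 - (\E_X m(X))^2$. Finally I would apply $\E_X m(X) = \E_{X,Y} h$ once more to recognize this quantity as $\E_{X,Y} h^2 - (\E_{X,Y} h)^2 = \V_{X,Y} h$, which completes the proof.

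I expect no genuine obstacle here: the statement is the standard law of total variance, and the only care required is careful bookkeeping of which variable each expectation ranges over, together with the correct, repeated use of the tower property. The single step worth stating explicitly is the cancellation of $\E_X m(X)^2$, since it is precisely this cancellation that makes the two right-hand terms recombine into the clean unconditional second-moment form.
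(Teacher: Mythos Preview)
Your proof is correct and follows essentially the same approach as the paper: both expand the right-hand side via $\V A = \E A^2 - (\E A)^2$, use the tower property $\E_{X,Y}(\cdot) = \E_X \E_{Y|X}(\cdot)$, and observe that the $\E_X(\E_{Y|X} h)^2$ terms cancel to leave $\E_{X,Y} h^2 - (\E_{X,Y} h)^2$. Your introduction of the shorthand $m(X)$ is the only cosmetic difference.
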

\begin{proof}
  Note that $\V A = \E A^2 - (\E A)^2$ holds for any random variable $A$.
  Applying it to the right side of the formula yields
  \begin{align}
    \E_X \V_{Y | X} h + \V_X \E_{Y | X} h
    &= \E_X[\E_{Y | X} h^2 - (\E_{Y | X} h)^2] + \E_X(\E_{Y | X} h)^2 - (\E_X \E_{Y | X} h)^2  \notag \\
    &= \E_{X, Y} h^2 - (\E_{X, Y} h)^2 = \V_{X, Y} h.  \notag
  \end{align}
\end{proof}

\begin{proof}[Proof of \theoref{variance-lrb-releg}]
  Suppose that $z_{\except i}$ is reparameterized as is done in \secref{releg}.
  Denote the parent node of $z_i$ simulated with $\noise_{\except i}$ by $\pa_i(\noise_{\except i})$.
  The variance of the likelihood-ratio method is evaluated as follows.
  \begin{align}
    \VLR(b)
    &= \V_{\q} (f(x, z) - b) \Df{}{\phi_{ij}} \log \qi  \notag \\
    &= \V_{p(\noise_{\except i}) \qinoise} ( f(x, z_i, z_{\except i} = \gxi) - b) \Df{}{\phi_{ij}} \log \qinoise  \notag \\
    &= \E_{p(\noise_{\except i})} \V_{\qinoise} ( f(x, z_i, z_{\except i} = \gxi) - b) \Df{}{\phi_{ij}} \log \qinoise  \notag \\
    &+ \V_{p(\noise_{\except i})} \E_{\qinoise} ( f(x, z_i, z_{\except i} = \gxi) - b) \Df{}{\phi_{ij}} \log \qinoise.
    \label{eq:proof-intermid}
  \end{align}
    where we use Lemma \ref{lem:var-part} on $X = \noise_{\except i}$ and $Y = z_i$ in the last equation.
  Note that $\E_{\qi} b \Df{}{\phi_{ij}} \log \qi = 0$; thus, the second term of \eref{proof-intermid} can be further transformed as follows.
  \begin{align}
    &\V_{p(\noise_{\except i})} \E_{\qinoise} (f(x, z_i, z_{\except i} = \gxi) - b) \Df{}{\phi_{ij}} \log \qinoise  \notag \\
    &=\ \V_{p(\noise_{\except i})} \sum_{z_i} f(x, z_i, z_{\except i} = \gxi) \Df{}{\phi_{ij}} \qinoise  \notag \\
    &=\ \VReLEG.  \notag
  \end{align}
  Since the first term of \eref{proof-intermid}, which is an expectation of a variance, is not less than zero, we conclude that $\VLR(b) \ge \VReLEG$.
\end{proof}

\end{document}